\documentclass[11pt]{article}

\PassOptionsToPackage{colorlinks=true,linkcolor=blue,urlcolor=blue,citecolor=blue}{hyperref}
\usepackage{acl}

\usepackage{times}
\usepackage{latexsym}
\usepackage{makecell}
\usepackage[T1]{fontenc}
\usepackage[utf8]{inputenc}
\usepackage{microtype}
\usepackage{inconsolata}
\usepackage{graphicx}
\usepackage{amsmath}
\usepackage{amssymb} 
\usepackage{amsfonts}
\usepackage{mathtools}
\usepackage{xcolor}
\usepackage{booktabs}
\usepackage{multirow}
\usepackage[table]{xcolor}
\usepackage{colortbl}
\usepackage{listings}
\usepackage{tcolorbox}
\usepackage{tabularx}

\definecolor{lightblue}{HTML}{4A9CC7} 
\definecolor{lightred}{HTML}{E8978A}   

\definecolor{headerblue}{RGB}{200, 220, 240}     
\definecolor{highlightblue}{RGB}{220, 235, 250}    

\definecolor{placeholder}{RGB}{180,50,50}
\definecolor{codebg}{RGB}{250,250,250}
\definecolor{codeframe}{RGB}{120,120,120}

\newcommand\red[1]{\textcolor{red}{#1}}

\newcommand{\lightblue}[1]{\textcolor{lightblue}{#1}}
\newcommand{\lightred}[1]{\textcolor{lightred}{#1}}

\lstdefinestyle{promptstyle}{
    basicstyle=\ttfamily\small,
    breaklines=true,
    breakatwhitespace=false,
    columns=flexible,
    keepspaces=true,
    showstringspaces=false,
    tabsize=2,
    frame=none,
    backgroundcolor=\color{codebg},
    escapeinside={(*@}{@*)},  
}

\usepackage{amsthm}
\newtheorem{theorem}{Theorem}[section]
\newtheorem{lemma}[theorem]{Lemma}
\newtheorem{assumption}[theorem]{Assumption}

\title{Good Reasoning Makes Good Demonstrations: Implicit Reasoning \\ Quality Supervision via In-Context Reinforcement Learning}

\author{
  \textbf{Tiehua Mei}$^{1}$\thanks{Equal contribution.},\ \ \textbf{Minxuan Lv}$^{2}$\footnotemark[1],\ \ \textbf{Leiyu Pan}$^{3}$,\ \ \textbf{Zhenpeng Su}$^{2}$\thanks{Corresponding authors.},\\ \ \ \textbf{Hongru Hou}$^{1}$, 
  \textbf{Hengrui Chen}$^{1}$,\ \ \textbf{Ao Xu}$^{1}$,\ \ \textbf{Deqing Yang}$^{1}$\footnotemark[2] \\[6pt]
  $^{1}$School of Data Science, Fudan University \\
  $^{2}$University of Chinese Academy of Sciences \\
  $^{3}$College of Intelligence and Computing, Tianjin University \\[4pt]
  $\boxtimes$\ \href{mailto:thmei24@m.fudan.edu.cn}{\texttt{thmei24@m.fudan.edu.cn}},\ \href{mailto:suzhenpeng@ucas.ac.cn}{\texttt{suzhenpeng13@163.com}}
}

\begin{document}
\maketitle

\begin{abstract}
Reinforcement Learning with Verifiable Rewards (RLVR) improves reasoning in large language models but treats all correct solutions equally, potentially reinforcing flawed traces that arrive at correct answers by chance. We observe that \emph{better reasoning makes better demonstrations}: high-quality solutions serve as more effective in-context examples than low-quality ones. We term this teaching ability \textbf{Demonstration Utility}, and show that the policy model's own in-context learning ability provides an efficient way to measure it, yielding a quality signal termed \textbf{Evidence Gain}. To leverage this signal during training, we introduce \textbf{In-Context RLVR}, which prepends demonstrations before each rollout. Theoretically, we prove that this simple input modification implicitly reweights rewards by a factor approximately proportional to Evidence Gain, assigning higher weights to high-quality traces without requiring costly computation. Experiments on mathematical reasoning benchmarks demonstrate consistent improvements in both accuracy and reasoning quality over standard RLVR baselines. Our codes and datasets are available at \url{https://github.com/Mithas-114/IC-DAPO}.
\end{abstract}

\section{Introduction}
\label{sec:intro}

Reinforcement Learning with Verifiable Rewards (RLVR) has emerged as a powerful paradigm for improving LLM reasoning~\citep{shao2024deepseekmathpushinglimitsmathematical,guo2025deepseekr1}, especially in domains such as mathematics where correctness can be checked by rules~\citep{su2025entropyratioclippingsoft}. By using outcome-level supervision, RLVR avoids costly process annotations and scales well~\citep{mroueh2025reinforcementlearningverifiablerewards}. However, this simplicity comes with a limitation: all correct solutions receive equal reward, regardless of the reasoning used to obtain them~\citep{do2025definesgoodreasoningllms}. This is problematic since models can produce flawed reasoning traces that coincidentally get correct answers, particularly when final answers are simple values that can be guessed~\citep{guo2025rightenoughpitfallsoutcome}. Consequently, reinforcing such traces may corrupt internal reasoning strategies, degrading performance on other problems~\citep{macdiarmid2025naturalemergentmisalignmentreward}.

A natural solution is to use process reward models (PRMs)~\citep{zhang2025lessonsdevelopingprocessreward,ye2025correctnessharmonizingprocessoutcome} that score intermediate steps. However, PRMs typically require extensive human annotation or auxiliary trained evaluators~\citep{lightman2024let}. This raises a key question: \emph{Can we encourage high-quality reasoning within RLVR without requiring step-level supervision or trained reward models?}

\paragraph{Demonstration Utility as Global Quality Signal.} Our key insight is that \emph{high-quality reasoning traces are better teachers than low-quality ones}~\citep{min2022rethinking}. Consider two solutions that both arrive at the correct answer: one is coherent and complete; the other contains redundant or unclear steps. When used as in-context demonstrations, the former provides transferable problem-solving patterns that help the model generate better solutions, while the latter provides less reference value~\citep{li2025llmseasilylearnreason}. We term this teaching ability \textbf{Demonstration Utility}. Crucially, the policy model's own in-context learning (ICL) ability provides a natural way to measure Demonstration Utility. Specifically, we construct a held-out validation set composed of questions and high-quality reference reasoning traces. We propose computing the average increase in the model's log-likelihood of generating these references after a candidate reasoning trace is prepended as a demonstration. We call this measure \textbf{Evidence Gain} (\S\ref{sec:evidence_gain}). Unlike PRMs that require external evaluators, Evidence Gain leverages the intrinsic ICL capability of the policy model itself. Experiments in Section~\ref{sec:evidence_gain} confirm that this intrinsic signal effectively distinguishes good reasoning from bad.

\paragraph{Implicit Reward Reweighting via In-Context RLVR.}
While Evidence Gain provides a reasoning quality signal, computing it as rewards would introduce substantial overhead. Fortunately, we show that this explicit computation is unnecessary. Our key idea is to reverse the process: instead of computing Evidence Gain \emph{after} generation as rewards, we use the same validation set to guide training \emph{before} generation. Specifically, before each rollout, we sample a demonstration from the validation set and prepend it to the current question, then perform standard RL updates in this demonstration-conditioned setting, a procedure we term \textbf{In-Context RLVR}.
Theoretically, we prove that this training objective is equivalent to standard zero-shot RLVR but with \emph{rewards implicitly reweighted by a factor approximately proportional to $\exp(\Delta)$}, where $\Delta$ denotes Evidence Gain (Theorem~\ref{thm:implicit_reweight}, Theorem~\ref{thm:log_linear}). Consequently, high-quality traces with greater teaching utility receive amplified gradient signals, while low-quality traces receive relatively lower weights through this implicit reweighting mechanism.

\paragraph{Contributions.}
\textbf{(1)} We introduce \textbf{Evidence Gain}, a quality signal that measures reasoning quality by leveraging the policy model's intrinsic ICL ability, requiring no external evaluators or step-level supervision. 
\textbf{(2)} We show that this signal can be seamlessly integrated into training via \textbf{In-Context RLVR}, which prepends demonstrations during training. We prove (Theorem~\ref{thm:implicit_reweight}) that this simple input modification implicitly reweights rewards, and further show (Theorem~\ref{thm:log_linear}) that the weight factor is approximately proportional to $\exp(\Delta)$.
\textbf{(3)} Experiments across mathematical benchmarks validate that our method improves both accuracy and reasoning quality over competitive baselines, while introducing less than 5\% training overhead.
\section{Evidence Gain as Quality Measure}
\label{sec:evidence_gain}

This section formally defines \textbf{Evidence Gain}, a quality signal that measures reasoning quality by leveraging the policy model's intrinsic in-context learning ability, and validates it empirically.

Our basic idea is that, when used as demonstrations, high-quality reasoning traces provide more valuable problem-solving patterns, while low-quality reasoning (even with correct answers) provides less reference value due to flaws such as inconsistent logic~\citep{li2025llmseasilylearnreason}. This motivates us to quantify the quality of a solution by its teaching ability as a demonstration, formalized as follows.

\subsection{Definition}
Let $\pi_\theta$ denote policy model. Given a question $q$, a model-generated reasoning trace $r$, and a held-out validation set $\mathcal{E} = \{(e_q, e_r)\}$ composed of questions $e_q$ and high-quality reference reasoning traces $e_r$, we define \textbf{Evidence Gain} as:
\begin{equation}
\small
\Delta(q, r) = \mathbb{E}_{e \sim \mathcal{E}}\left[\log \pi_\theta(e_r | q, r, e_q) - \log \pi_\theta(e_r | e_q)\right]
\label{eq:evidence_gain}
\end{equation}
Intuitively, $\Delta$ measures how much prepending the pair $(q, r)$ improves the model's ability to generate reference solutions. Averaging over $\mathcal{E}$ ensures that high $\Delta$ reflects transferable reasoning patterns rather than spurious matches to any single sample.

\subsection{Correlation with Reasoning Quality}
We validate Evidence Gain on DeepSeek-R1-Distill-Qwen at 1.5B and 7B scales using KlearReasoner-MathSub-30K dataset~\citep{su2025klearreasoneradvancingreasoningcapability}. First, we sample 3,000 questions, generate 8 responses per question, and retain traces with correct final answers (12,251 for 1.5B and 16,910 for 7B). We then employ DeepSeek-V3.2~\citep{deepseekai2025deepseekv32pushingfrontieropen}, a strong LLM-based evaluator, to assess the reasoning quality. The evaluator scores each solution across multiple dimensions including logical coherence and redundancy, and assigns an overall quality score on a 1--5 scale. Next, we sample 100 new questions to construct a held-out validation set $\mathcal{E}$. For each $e_q\in \mathcal{E}$, we generate a correct solution using DeepSeek-R1-0528~\citep{guo2025deepseekr1}. We leverage a feature of DeepSeek-R1: it first produces a draft chain-of-thought inside \texttt{<think>} \texttt{</think>}, and then outputs a more polished reasoning solution afterward. We treat the content following \texttt{</think>} as the high-quality reference trace $e_r$.

\begin{figure}[t]
\centering
\includegraphics[width=\columnwidth]{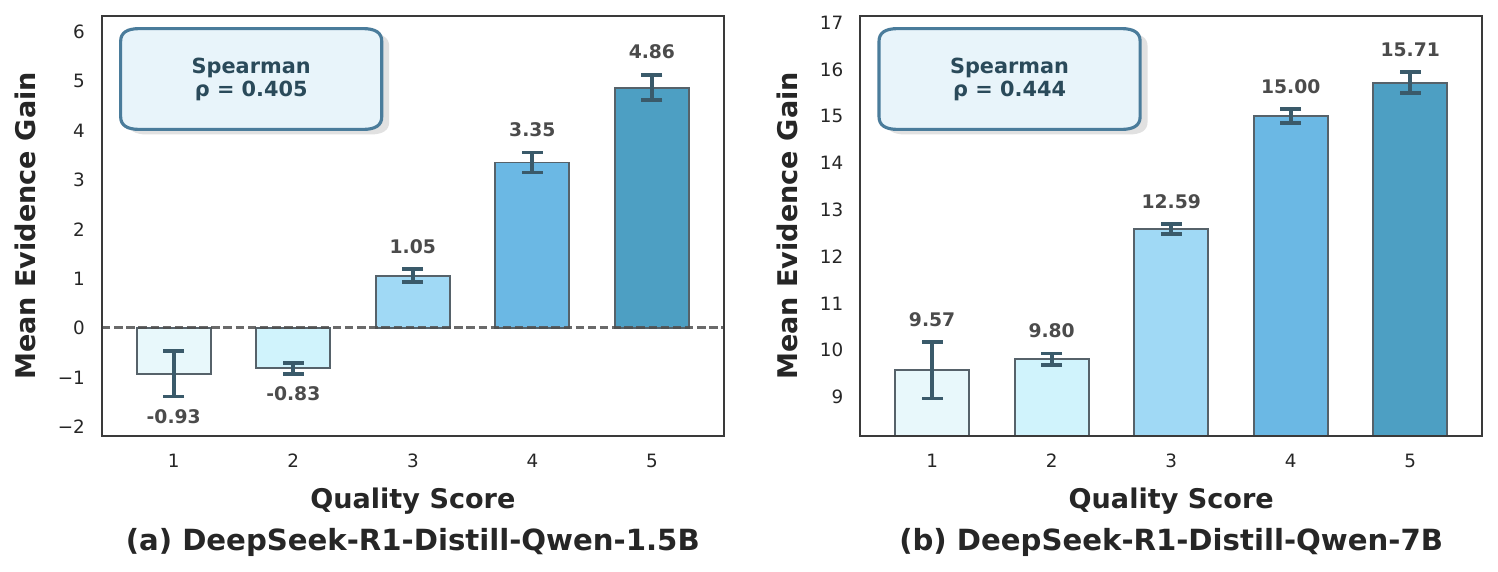}
\caption{Mean Evidence Gain by quality score with 95\% confidence intervals on two models.}
\label{fig:evidence_gain}
\end{figure}

Figure~\ref{fig:evidence_gain} shows an interesting pattern: model ability determines the absolute baseline of $\Delta$, while reasoning quality differentiates its relative magnitude. The 7B model, with stronger ICL ability, extracts useful information from any reasoning trace, resulting in uniformly positive $\Delta$ values greatly higher than those of 1.5B. However, the relative ordering remains consistent within each scale: high-quality traces yield higher $\Delta$ than low-quality ones. This relative difference is what matters for RL training. These results confirm that Evidence Gain effectively distinguishes reasoning quality. Human evaluation in Appendix~\ref{app:human_eval} supports these findings.

\subsection{Comparison with Other Proxy Signals}
To contextualize Evidence Gain, we compare it against three representative proxy signals of reasoning quality from prior work: response length~\cite{xin2026surrogatesignalsformatlength}, log-probability~\cite{kadavath2022languagemodelsmostlyknow}, and majority voting~\citep{wang2023selfconsistencyimproveschainthought}. Detailed descriptions of these signals are provided in Appendix~\ref{app:proxy_signals}. Table~\ref{tab:proxy_comparison} reports Spearman correlations with reasoning quality scores.

\begin{table}[t]
\centering
\small
\begin{tabular}{lcccc}
\toprule
 & Length & LogProb & MajorVote & $\Delta$ (Ours) \\
\midrule
1.5B & $-$0.147 & 0.129 & 0.079 & \textbf{0.405} \\
7B & $-$0.161 & 0.178 & 0.109 & \textbf{0.444} \\
\bottomrule
\end{tabular}
\caption{Spearman correlation ($\rho$) between proxy signals and reasoning quality. Evidence Gain achieves stronger correlation than alternatives.}
\label{tab:proxy_comparison}
\end{table}

Evidence Gain achieves stronger correlation with reasoning quality than all three proxies. This is because Evidence Gain captures transferable problem-solving patterns through the model's ICL mechanism, encoding richer quality information than any single surface-level feature (e.g., reasoning length). This is further validated by our per-dimension analysis in Appendix~\ref{app:evidence_gain_validation}, which shows that Evidence Gain reflects multiple aspects of reasoning quality rather than a single dimension.
\section{Implicit Reward Reweighting}
\label{sec:method}

The correlation between Evidence Gain ($\Delta$) and reasoning quality (\S\ref{sec:evidence_gain}) suggests that upweighting high-$\Delta$ traces during training could improve reasoning. However, explicitly computing $\Delta$ for each rollout is prohibitively expensive.\footnote{Computing $\Delta$ for ${\sim}$12K samples over 100 demonstrations requires approximately 80 hours on H800.} In this section, we show that explicit computation is unnecessary. Evidence Gain can be seamlessly integrated into training through \textbf{In-Context RLVR}.

In-Context RLVR modifies standard RLVR by a simple input-side change. The core idea is to reverse the process: instead of computing Evidence Gain \emph{after} generation to reweight rewards, we utilize the demonstration set to guide training \emph{before} generation. Specifically, before each rollout, we sample a demonstration $e = (e_q, e_r)$ uniformly at random from the demonstration set $\mathcal{E}$, prepend it to the current question $q$, and generate response $r$ from the demonstration-conditioned policy $\pi_\theta(r|e,q)$. Standard RL updates are then performed using correctness reward $R(q,r) \in \{0,1\}$.

\subsection{Theoretical Foundation}
We establish that In-Context RLVR implicitly performs reward reweighting, where the reweighting factor is approximately proportional to $\exp(\Delta)$. We first define the In-Context RLVR objective:
\begin{equation}
J_{\text{IC}}(\theta) = \mathbb{E}_{q} \mathbb{E}_{e \sim \mathcal{E}, r \sim \pi_\theta(\cdot | e, q)}[R(q, r)].
\label{eq:icrlvr_obj}
\end{equation}
Our theoretical result proceeds in two steps (full derivation in Appendix~\ref{app:proof}).

\paragraph{Step 1: Bayesian Identity.} Since demonstrations are sampled independently from training questions, the demonstration question $e_q$ alone carries no information relevant to solving $q$.\footnote{This assumption is empirically verified in Appendix~\ref{app:bayesian_identity}.} Under this assumption, the following Bayesian identity holds:
\begin{equation*}
\pi_\theta(r|e,q) = \pi_\theta(r|q) \cdot \frac{\pi_\theta(e_r|q, r, e_q)}{\pi_\theta(e_r|e_q)}.
\end{equation*}
This identity shows that the conditioned policy equals the base policy multiplied by a likelihood ratio. Using this, we prove (Theorem~\ref{thm:implicit_reweight}) that the In-Context RLVR objective can be rewritten as:
\begin{equation}
J(\theta) = \mathbb{E}_{q} \mathbb{E}_{r \sim \pi_\theta(\cdot | q)}\left[R(q, r) \cdot w(q, r)\right],
\label{eq:reweighted_obj}
\end{equation}
where $w(q, r) = \mathbb{E}_{e}[\exp(\Delta_e)]$ with $\Delta_e = \log \pi_\theta(e_r|q,r,e_q) - \log \pi_\theta(e_r|e_q)$.

\paragraph{Step 2: Log-Linear Relationship.} The weight $w = \mathbb{E}_e[\exp(\Delta_e)]$ differs from $\exp(\Delta)$ where $\Delta = \mathbb{E}_e[\Delta_e]$ is the Evidence Gain. However, we further show (Theorem~\ref{thm:log_linear}) that $$\log w(q,r) \approx \Delta(q,r) + c,$$ where $c$ is a model-specific constant. This log-linear approximation is empirically tight: Pearson correlations between $\log w$ and $\Delta$ exceed 0.8 at both 1.5B and 7B scales (Figure~\ref{fig:log_linear}), confirming that $w(q,r) \propto \exp(\Delta(q,r))$ in practice.

Combining these two steps, In-Context RLVR implicitly reweights rewards by a factor $w$ approximately proportional to $\exp(\Delta)$, assigning higher weights to traces with higher Evidence Gain.

\begin{table*}[t]
\centering
\setlength{\tabcolsep}{14pt}
\resizebox{1.0\linewidth}{!}{
\begin{tabular}{lccccccc|c}
\toprule
\textbf{Method} & \textbf{AIME24} & \textbf{AIME25} & \textbf{HMMT25} & \textbf{MATH500} & \textbf{AMC23} & \textbf{Olympiad} & \textbf{Average} &\textbf{Time/Step (s)} \\
\midrule
\textbf{DS-R1-Distill-Qwen-1.5B} & 29.2 & 24.1 & 13.1 & 86.0 & 73.7 & 51.8 & 46.3 & -- \\
\quad + GRPO~\citep{shao2024deepseekmathpushinglimitsmathematical}  & 33.4 & 28.1 & 16.6 & 88.3 & 79.3 & 56.2 & 50.3 & 457.4 \\
\rowcolor{highlightblue}
\quad + IC-GRPO (Ours) & 38.3 \rlap{\red{$\boldsymbol\uparrow$}} & 30.6 \rlap{\red{$\boldsymbol\uparrow$}} & 17.7 \rlap{\red{$\boldsymbol\uparrow$}} & 89.5 \rlap{\red{$\boldsymbol\uparrow$}} & 82.5 \rlap{\red{$\boldsymbol\uparrow$}} & 56.9 \rlap{\red{$\boldsymbol\uparrow$}} & 52.6 \rlap{\red{$\boldsymbol\uparrow$}} & 461.8 \\
\quad + DAPO~\citep{yu2025dapoopensourcellmreinforcement}  & 40.0 & 28.4 & 19.2 & 90.0 & 84.4 & 61.6 & 53.9& 459.6 \\
\quad + CISPO~\citep{minimax2025minimaxm1scalingtesttimecompute} & 32.9 & 25.1 & 13.2 & 85.8 & 80.9 & 54.9 & 48.8& 466.3\\
\quad + GSPO~\citep{zheng2025groupsequencepolicyoptimization} & 42.5 & \underline{33.6} & 19.0 & 90.3 & \underline{85.9} & \underline{62.6} & 55.7& 437.3\\
\quad + CE-GPPO~\citep{su2025cegppocoordinatingentropygradientpreserving} & 42.8 & 32.5 & \underline{20.5} & \underline{91.0} & 85.8 & 61.8 &55.7& 464.0\\
\rowcolor{highlightblue}
\quad + IC-DAPO (Ours) & \textbf{45.6} \rlap{\red{$\boldsymbol\uparrow$}}& \textbf{34.2} \rlap{\red{$\boldsymbol\uparrow$}}& \textbf{19.7} \rlap{\red{$\boldsymbol\uparrow$}}& \textbf{90.6} \rlap{\red{$\boldsymbol\uparrow$}}& \textbf{86.2} \rlap{\red{$\boldsymbol\uparrow$}}& \textbf{62.1} \rlap{\red{$\boldsymbol\uparrow$}}& \textbf{\underline{56.4}} \rlap{\red{$\boldsymbol\uparrow$}} & 477.2\\
\midrule
\textbf{DS-R1-Distill-Qwen-7B} & 54.5 & 39.1 & 26.2 & 93.6 & 90.6 & 67.0 & 61.8 & -- \\
\quad + GRPO~\citep{shao2024deepseekmathpushinglimitsmathematical}  & 55.3 & 40.3 & 24.5 & 93.7 & 88.8 & 65.6 & 61.4&305.6 \\
\quad + DAPO~\citep{yu2025dapoopensourcellmreinforcement}  & 62.0 & 45.9 & 27.4 & 94.1 & 92.3 & 69.9 & 65.3 & 303.1 \\
\quad + CE-GPPO~\cite{su2025cegppocoordinatingentropygradientpreserving} & \underline{64.2} & \textbf{50.3} & \underline{28.9} & \underline{95.3} & \underline{93.3} & \underline{71.6} & \underline{67.3} & 292.5\\
\rowcolor{highlightblue}
\quad + IC-DAPO (Ours)  & \textbf{66.5} \rlap{\red{$\boldsymbol\uparrow$}} & \underline{49.8} \rlap{\red{$\boldsymbol\uparrow$}} & \textbf{29.4} \rlap{\red{$\boldsymbol\uparrow$}}&\textbf{95.6} \rlap{\red{$\boldsymbol\uparrow$}} &  \textbf{93.7} \rlap{\red{$\boldsymbol\uparrow$}} & \textbf{71.7} \rlap{\red{$\boldsymbol\uparrow$}} & \textbf{67.8} \rlap{\red{$\boldsymbol\uparrow$}} & 315.6\\
\bottomrule
\end{tabular}
}
\caption{Performance comparison across mathematical reasoning benchmarks. \textbf{Bold} and \underline{underline} indicate the best and second-best results respectively. \red{$\boldsymbol\uparrow$} denotes improvement over the corresponding baseline (GRPO or DAPO). Notably, training times are incomparable across scales due to different GPU configurations (32 vs. 128 GPUs). Given this high GPU requirements at 7B scales, GSPO and CISPO are evaluated only at 1.5B.}
\label{tab:table1}
\end{table*}

\subsection{Interpretation of Implicit Reweighting}
The reweighted reward $R \cdot w$ in Eq.~\ref{eq:reweighted_obj} implies a two-stage selection mechanism. First, the binary reward $R$ filters out traces with incorrect answers. Second, among correct traces, the weight $w \propto \exp(\Delta)$ differentiates reasoning quality, assigning higher weights to high-quality traces and lower weights to low-quality ones. While training explicitly samples from $\pi_\theta(r|e, q)$, In-Context RLVR implicitly optimizes the base policy $\pi_\theta(\cdot | q)$ with rewards reweighted by Evidence Gain.

Eq.~\ref{eq:reweighted_obj} shows that our method employs the model's own ICL ability to guide optimization, with the policy serving as both the learner and the implicit quality evaluator. A natural concern is whether Evidence Gain remains a valid quality signal as the policy evolves, since our validation in Section~\ref{sec:evidence_gain} uses a fixed model. We address this in Section~\ref{sec:exp}, showing that the correlation between Evidence Gain and reasoning quality remains stable throughout training.

Notably, while $J_{\text{IC}}$ and $J$ share the same expectation, they differ in variance. The explicit reweighting term $w \propto \exp(\Delta)$ in $J$ would introduce prohibitive reward variance. $J_{\text{IC}}$ avoids this instability by shifting the sampling distribution directly.
\section{Experiments}
\label{sec:exp}

To validate our framework, we combine In-Context RLVR with DAPO~\citep{yu2025dapoopensourcellmreinforcement}, yielding \textbf{IC-DAPO}. We choose DAPO as backbone because it is a widely adopted RLVR method whose key techniques (e.g., clip-higher) have been incorporated into many subsequent methods~\citep{yue2025vapoefficientreliablereinforcement, su2025cegppocoordinatingentropygradientpreserving}, making it a representative baseline for evaluating input-side modifications. All details of this section are provided in Appendix~\ref{app:exp_details}.

\subsection{Setup}

\paragraph{Dataset.}
Our training data is derived from KlearReasoner-MathSub-30K~\citep{su2025klearreasoneradvancingreasoningcapability}, which contains 30K mathematical reasoning problems. We partition training data into three disjoint subsets: (1) a \textbf{training set} for policy optimization, (2) a \textbf{demonstration set} $\mathcal{E}$ containing 1,082 question-reasoning pairs used for demonstration during IC-DAPO training, and (3) a \textbf{held-out set} $\mathcal{E}_0$ of 100 additional examples reserved for the correlation analysis in \S\ref{sec:analysis}. Both $\mathcal{E}$ and $\mathcal{E}_0$ are constructed following the procedure described in \S\ref{sec:evidence_gain}.

\paragraph{Baselines.}
We compare against several popular RLVR methods, including both standard outcome-based algorithms and more advanced objective-modifying variants. This selection directly tests whether our input-side modification can match algorithmic innovations in policy optimization. We exclude PRM-based methods as they require costly overhead that our method aims to avoid.

\paragraph{Training and Evaluation.}
We train DeepSeek-R1-Distill-Qwen at 1.5B and 7B scales. We conduct evaluations across various authoritative mathematical reasoning benchmarks, including AIME24, AIME25, HMMT25, MATH500~\citep{lightman2024let}, AMC23 and OlympiadBench~\citep{he-etal-2024-olympiadbench}. Following ~\citet{su2025cegppocoordinatingentropygradientpreserving}, we report avg@4 scores on MATH500 and OlympiadBench, and avg@32 scores on all other benchmarks. Crucially, \emph{all evaluation is conducted in zero-shot mode}, ensuring fair comparison with baselines.

\subsection{Main Results}
Table~\ref{tab:table1} presents benchmark performance. IC-DAPO outperforms DAPO by +2.5 average points at both scales, with gains particularly pronounced on competition benchmarks: +5.6 on AIME24 and +5.8 on AIME25 for the 1.5B model. This supports our hypothesis that implicit quality reweighting helps more on challenging problems where correct-but-low-quality traces are most harmful, which is further validated in Section~\ref{sec:q4}. To verify that this improvement generalizes beyond DAPO, we also apply In-Context RLVR to GRPO at 1.5B scale. IC-GRPO achieves +2.3 average improvement over GRPO across all benchmarks at this scale, confirming that the implicit reweighting mechanism transfers across different RL optimizers.

Beyond improvements over the corresponding baselines, IC-DAPO also matches or exceeds methods that modify the RL objective (e.g., GSPO, CISPO), achieving the highest average score at both scales while only altering the \emph{input distribution}. This suggests that input-side modification constitutes an improvement axis orthogonal to policy optimization algorithms. We further compare wall-clock training time per training step and find that IC-DAPO incurs slight overhead ($<$5\%), confirming its practicality.

\subsection{Analysis}
\label{sec:analysis}

Our theory (\S\ref{sec:method}) predicts that In-Context RLVR implicitly upweights high-$\Delta$ traces. To verify this, we track training dynamics by computing $\Delta$ on the held-out set $\mathcal{E}_0$ and assessing reasoning quality via Deepseek-V3.2, following procedures in \S\ref{sec:evidence_gain}.

\paragraph{Q1: Does implicit reweighting occur?}
Figure~\ref{fig:dynamics} (left) shows that mean Evidence Gain increases steadily under IC-DAPO throughout training, while DAPO exhibits smaller and slower growth. This confirms that the conditioned objective steers the policy toward traces with higher demonstration utility, exactly as predicted.

\paragraph{Q2: Does this improve reasoning quality?}
Figure~\ref{fig:dynamics} (middle) shows higher $\Delta$ corresponds to improved quality scores. Note that $\mathcal{E}_0$ used for evaluation is disjoint from $\mathcal{E}$ used for training, ensuring unbiased evaluation. This shows that by upweighting traces with high teaching utility, we encourage better reasoning rather than merely correct answers.

\paragraph{Q3: Is Evidence Gain valid throughout training?}
Finally, we address the concern from \S\ref{sec:method}. Figure~\ref{fig:dynamics} (right) shows that the Spearman correlation between $\Delta$ and quality remains stable (around $\rho \approx 0.4$) across training steps, confirming that the policy model's intrinsic ICL signal remains a robust quality indicator as training progresses.

\begin{figure}[t]
\centering
\includegraphics[width=\columnwidth]{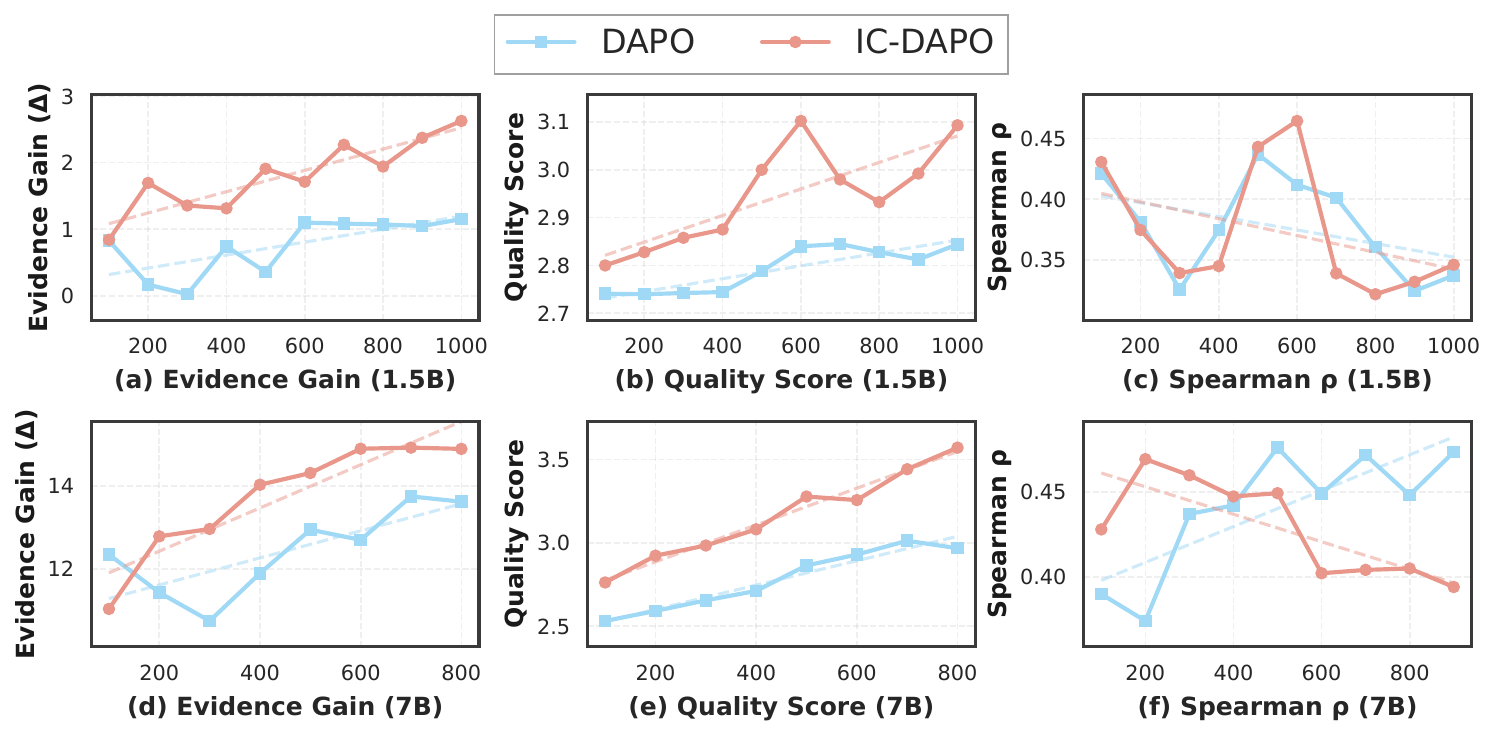}
\caption{Training dynamics of Evidence Gain, quality score, and their correlation ($\rho$) across training steps.}
\label{fig:dynamics}
\end{figure}

\begin{table}[t]
\centering
\small
\setlength{\tabcolsep}{4pt}
\begin{tabular}{llccc}
\toprule
\textbf{Method} & \textbf{Scale} & \textbf{Easy} & \textbf{Medium} & \textbf{Hard} \\
\midrule
DAPO & 1.5B & 98.3 & 90.1 & 23.1 \\
IC-DAPO & 1.5B & 98.8 {\scriptsize(+0.5\%)} & 93.5 {\scriptsize(+3.8\%)} & 26.0 {\scriptsize(\textbf{+12.6\%})} \\
\midrule
DAPO & 7B & 98.6 & 97.8 & 39.2 \\
IC-DAPO & 7B & 99.3 {\scriptsize(+0.7\%)} & 98.2 {\scriptsize(+0.4\%)} & 43.2 {\scriptsize(\textbf{+10.2\%})} \\
\bottomrule
\end{tabular}
\caption{Difficulty-stratified analysis. Parenthetical values denote relative improvement over DAPO.}
\label{tab:difficulty}
\end{table}

\paragraph{Q4: Where do the gains come from?}\label{sec:q4}
To examine whether quality-based reweighting benefits problems that demand higher reasoning quality, we stratify evaluation by difficulty. We rank all problems from 6 benchmarks in Table \ref{tab:table1} by backbone model accuracy and divide them into equal thirds. Table~\ref{tab:difficulty} shows that gains concentrate on hard problems: +12.6\% relative improvement at 1.5B and +10.2\% at 7B. On easier problems where baseline accuracy is near-perfect, there is limited room for quality-based reweighting to contribute. This pattern confirms that implicit quality reweighting provides the greatest advantage precisely where reasoning quality matters most.
\section{Conclusions}

We introduce Evidence Gain, a quality signal that measures reasoning quality based on the policy model's intrinsic ICL ability. To leverage this signal, we propose In-Context RLVR, which prepends demonstrations during training and implicitly reweights rewards by Evidence Gain to encourage high-quality traces. Experiments confirm improvements in both accuracy and reasoning quality over standard RLVR, providing a practical approach to improve reasoning quality in RLVR.

\section*{Limitations}
This work has two main limitations. First, although In-Context RLVR demonstrates consistent improvements across mathematical reasoning benchmarks, its generalization to other reasoning-intensive domains such as STEM problem-solving remains an open question due to computational constraints. Second, constructing the demonstration set requires access to a strong model (e.g., DeepSeek-R1) for generating high-quality reference traces. Alternative construction strategies that reduce this dependency should be further developed.

\bibliography{custom}

\appendix
\section{Related Work}
\label{sec:related}

\paragraph{Reinforcement Learning with Verifiable Rewards.}
RLVR has become a dominant paradigm for improving LLM reasoning~\citep{lambert2025tulu3pushingfrontiers,guo2025deepseekr1}.
By using rule-based correctness signals, RLVR avoids reward hacking and scales well.
GRPO~\citep{shao2024deepseekmathpushinglimitsmathematical} optimizes policies via group-based reward normalization without requiring a separate critic model.
Subsequent work addresses entropy collapse~\citep{yu2025dapoopensourcellmreinforcement}, sequence-level optimization~\citep{zheng2025groupsequencepolicyoptimization}, and training stability~\citep{minimax2025minimaxm1scalingtesttimecompute}.
Despite the advances, a fundamental limitation persists.
Binary correctness rewards assign equal reward to correct solutions regardless of reasoning quality, potentially reinforcing spurious traces that get correct answers through flawed logic~\citep{zhang2025lessonsdevelopingprocessreward,macdiarmid2025naturalemergentmisalignmentreward}.

\paragraph{Process Reward Models.}
PRMs address this limitation by providing step-level feedback.
\citet{lightman2024let} demonstrate substantial gains from process supervision over outcome supervision, though their approach required approximately 800,000 human-annotated step labels.
Automated alternatives such as Math-Shepherd~\citep{wang2024mathshepherdverifyreinforcellms} construct step-level labels through Monte Carlo estimation, trading annotation cost for computational overhead from repeated rollouts.
These methods share a common constraint.
Obtaining reliable process signals demands either substantial human effort or significant compute resources.

\paragraph{Quality-Aware Reward Reweighting.}
\label{app:reweighting_taxonomy}
Standard RLVR assigns equal reward to all correct traces regardless of reasoning quality. Several approaches address this limitation through reward reweighting, which we categorize into three families.

\textit{External reward models} (e.g., PRM~\citep{zhang2025lessonsdevelopingprocessreward}) train a separate model to score reasoning quality at each step. However, this requires enormous annotation effort, adds per-step evaluation cost throughout training, and remains susceptible to reward hacking~\citep{gao2022scalinglawsrewardmodel}.

\textit{Self-evaluation methods} (e.g., Self-Rewarding~\citep{selfrewarding}, RLPR~\citep{yu2025rlprextrapolatingrlvrgeneral}) let the model judge the quality of its own output. This avoids an external model but introduces different costs: each quality assessment requires additional forward passes during training. Moreover, without reliable external guidance, the model may reinforce its own errors through self-consistent illusions~\citep{huang2024largelanguagemodelsselfcorrect}.

\textit{Proxy signals} (e.g., length~\citep{xin2026surrogatesignalsformatlength}, log-probability~\citep{kadavath2022languagemodelsmostlyknow}, majority voting~\citep{wang2023selfconsistencyimproveschainthought}) are lightweight but capture only a single surface dimension of quality. Majority voting, for instance, measures only final-answer consistency and cannot distinguish rigorous reasoning from lucky guessing.

Our method, Evidence Gain, belongs to the proxy signal family. The key difference from existing proxy signals is that Evidence Gain integrates multiple quality dimensions into a single scalar through the model's ICL mechanism, rather than relying on one surface feature. We quantitatively compare Evidence Gain against representative proxy signals in Section~\ref{sec:evidence_gain}.

\paragraph{Self-Derived Reasoning Quality Signals.}
An emerging direction leverages signals derived from the model itself rather than external supervision.
\citet{zhang2025covo} construct intrinsic rewards from trajectory consistency and volatility, which requires computing distances between each intermediate state and all distinct final answers across sampled rollouts, incurring $O(NTK)$ additional forward passes per prompt beyond standard algorithm.
\citet{xiong2025selfrewardingcorrectionmathematicalreasoning} train models to perform iterative self-correction loops, where the model sequentially detects errors, revises outputs, and decides when to terminate.
In contrast, our method measures reasoning quality through demonstration utility, motivated by findings that high-quality reasoning traces can serve as effective in-context demonstrations~\citep{min2022rethinking,li2025llmseasilylearnreason}.
Crucially, this signal can be integrated implicitly into the training objective via In-Context RLVR, adding less than 5\% overhead without any explicit quality computation. Importantly, our approach differs fundamentally from demonstration selection methods~\citep{luo2024incontextlearningretrieveddemonstrations}, which develop retrieval or optimization strategies to identify the best demonstrations for each input query. In contrast, we leverage demonstrations to assess the quality of candidate reasoning traces; since Evidence Gain (Eq.~\ref{eq:evidence_gain}) is defined as an average over the validation set, all demonstrations are equally important for every query in our method.

\section{Experimental Details}
\label{app:exp_details}

\subsection{Datasets and Preprocessing}
\label{app:datasets}

Our training data is derived from KlearReasoner-MathSub-30K~\citep{su2025klearreasoneradvancingreasoningcapability}, which contains approximately 30K high-quality mathematical reasoning problems collected from several curated sources, including Skywork-OR1~\citep{he2025skyworkopenreasoner1}, Acereason~\citep{chen2025acereasonnemotronadvancingmathcode}, NuminaMath~\citep{numina_math_datasets}, and DeepScaleR~\citep{deepscaler2025}. To mitigate potential data contamination, the dataset has been processed with 9-gram deduplication against the evaluation benchmarks.

We first randomly partition 28k samples from the full dataset for policy optimization. For the remaining 2k samples, we generate reasoning traces using DeepSeek-R1-0528~\citep{guo2025deepseekr1} and filter the outputs using rule-based validators to retain only those with correct final answers, yielding approximately 1,200 valid examples. From this filtered set, we randomly select 100 examples to form the held-out set $\mathcal{E}_0$ for the correlation analysis in Section~\ref{sec:analysis}, with the remaining 1,082 examples forming the demonstration set $\mathcal{E}$ used for prepending demonstrations during training. For each example in $\mathcal{E}$ and $\mathcal{E}_0$, we extract the content following \texttt{</think>} as the reference reasoning trace following the procedure described in Section~\ref{sec:evidence_gain}. We manually verify the quality of these solutions.

\paragraph{Demonstration Pipeline Cost.}
The demonstration construction pipeline is lightweight and reproducible. The entire process involves 2,000 API calls to DeepSeek-R1, correctness verification via math-verify, and extraction of the content following \texttt{</think>}. This takes approximately one hour at a total API cost of approximately \$50. This is a one-time, pre-training cost with zero expense during training or inference. For comparison, training a PRM requires many step-level annotations. Scaling to new domains requires only replacing the source problems, with no changes to the pipeline itself.

\subsection{Details for Main Experiments}
\label{app:main_exp_details}

We train DeepSeek-R1-Distill-Qwen-1.5B\footnote{https://huggingface.co/deepseek-ai/DeepSeek-R1-Distill-Qwen-1.5B} and DeepSeek-R1-Distill-Qwen-7B\footnote{https://huggingface.co/deepseek-ai/DeepSeek-R1-Distill-Qwen-7B}. We evaluate on six mathematical reasoning benchmarks: AIME24, AIME25, HMMT25, MATH500, AMC23, and OlympiadBench. For evaluation metrics, we report avg@4 scores on MATH500~\citep{lightman2024let} and OlympiadBench~\citep{he-etal-2024-olympiadbench}, and avg@32 scores on all other benchmarks, following prior work~\citep{su2025cegppocoordinatingentropygradientpreserving}. At inference, we set the maximum generation length to 32k tokens for AIME24 and AIME25, and 16k tokens for the other datasets. For answer extraction, we follow the standard practice adopted in \citet{yang2024qwen25mathtechnicalreportmathematical}: parsing the contents enclosed within the \texttt{\textbackslash boxed\{\}} structure in model outputs to identify the final answer. Answer correctness is judged by math-verify\footnote{\url{https://github.com/huggingface/Math-Verify}}, which performs symbolic comparison to handle equivalent mathematical expressions.

All evaluation is conducted in zero-shot mode without any demonstrations, ensuring fair comparison with baseline methods and validating that our approach requires no modification to deployment. This zero-shot evaluation also empirically confirms our theoretical claim in Section~\ref{sec:method}: while demonstrations are used during training to enable implicit quality reweighting, In-Context RLVR implicitly optimizes the base policy which can operate without demonstrations.

\subsection{Details for Training Dynamics Analysis}
\label{app:analysis_details}

Theoretical analysis in Section~\ref{sec:method} shows that In-Context RLVR implicitly reweights rewards by Evidence Gain, with the reweighted reward $R \cdot w$ implying a two-stage selection mechanism: first, the binary reward $R$ filters out traces with incorrect answers ($R=0$); second, among correct traces ($R=1$), the weight $w \propto \exp(\Delta)$ differentiates reasoning quality. Therefore, Evidence Gain is designed to distinguish reasoning quality \emph{among correct solutions}. To validate this implicit reweighting mechanism empirically (Section~\ref{sec:analysis}), we track both Evidence Gain and reasoning quality scores exclusively on traces with correct final answers.

Specifically, we use checkpoints from both DAPO and IC-DAPO at 1.5B and 7B scales. Every 100 training steps, we randomly sample 100 queries from the training set. Importantly, at each chosen step, DAPO and IC-DAPO share the same set of sampled queries to ensure fair comparison. For each query, we generate 8 rollouts and retain only those with correct final answers. We then compute Evidence Gain on the held-out set $\mathcal{E}_0$, and assess reasoning quality using DeepSeek-V3.2 as described in Appendix~\ref{app:llm_eval}. Finally, we compute the Spearman correlation $\rho$ between Evidence Gain and quality scores to verify that Evidence Gain remains a valid quality signal throughout training.

\subsection{LLM-based Quality Evaluation}
\label{app:llm_eval}

To automatically assess reasoning quality, we employ DeepSeek-V3.2~\citep{deepseekai2025deepseekv32pushingfrontieropen} as an LLM-based evaluator. Our evaluation rubric is informed by prior work on solution quality assessment~\citep{ye2024flaskfinegrainedlanguagemodel, xia2025reasoningeval, olgaroscoe}. To ensure comprehensive coverage, we define eight complementary dimensions as shown in Table~\ref{tab:eval_dimensions}. For each reasoning trace, the evaluator assigns a score from 1 to 5 on each dimension along with explicit textual explanations for justification, and finally provides an overall quality score from 1 to 5. The complete prompt template is provided in Appendix~\ref{sec:quality_eval_prompt}. 

\begin{table}[h]
\centering
\small
\begin{tabularx}{\columnwidth}{lX}
\toprule
\textbf{Dimension} & \textbf{Definition} \\
\midrule
Repetition & Same steps or ideas repeated \\
Redundancy & Unnecessary or verbose content \\
Logical Consistency & Contradictions \\
Relevance & Off-topic exploration \\
CoT-Ans Alignment & Answer derived from reasoning \\
Reasoning Rigor & Claims justified without leaps \\
Clarity & Easy to follow, well-structured \\
Completeness & All necessary steps present \\
\bottomrule
\end{tabularx}
\caption{Quality evaluation dimensions.}
\label{tab:eval_dimensions}
\end{table}

\subsection{Implementation of IC-DAPO}
Following standard DAPO, we sample 8 rollouts per training question. To integrate In-Context RLVR, for $m$ of these rollouts, we independently sample a random demonstration from $\mathcal{E}$ and prepend it to the question before generation; the remaining $n = 8 - m$ rollouts are generated from the original question. All 8 rollouts are then scored and updated using DAPO objective. In our experiments, we set $m = 6$ and $n = 2$. This mixed sampling strategy serves two purposes: (1) it preserves sufficient reward variance within each group for DAPO's group-based normalization to compute meaningful advantages, and (2) it increases the diversity of input contexts within each training batch. Our experiments validate the effectiveness of this configuration. Practitioners may explore alternative values of $m$ to adapt to different tasks.

\subsection{Implementation of Baselines}
\paragraph{GRPO} optimizes policies via group-based reward normalization without requiring a separate critic model. Following \citet{shao2024deepseekmathpushinglimitsmathematical}, we adopt symmetric clipping bounds with $\epsilon = 0.2$.
\paragraph{DAPO} extends GRPO by introducing asymmetric clipping bounds and dynamic sample filtering to mitigate entropy collapse. Following \citet{yu2025dapoopensourcellmreinforcement}, we set the lower and upper clipping thresholds to $\epsilon_{\text{low}} = 0.2$ and $\epsilon_{\text{high}} = 0.28$, respectively.
\paragraph{CISPO} applies clipping directly to the importance sampling weights rather than to the final policy update. Following \citet{cui2025entropymechanismreinforcementlearning}, we set symmetric clipping bounds with $\epsilon = 0.2$.
\paragraph{GSPO} employs a sequence-level importance ratio to enhance training stability and scalability. Following \citet{zheng2025groupsequencepolicyoptimization}, we set the lower and upper clipping thresholds to $\epsilon_{\text{low}} = 0.0003$ and $\epsilon_{\text{high}} = 0.0004$, respectively.
\paragraph{CE-GPPO} reintroduces gradient signals from tokens outside the clipping interval in a bounded manner through a stop-gradient operation, enabling fine-grained control over policy entropy dynamics. We directly report results from the original paper~\citep{su2025cegppocoordinatingentropygradientpreserving}. As this work provides two sets of evaluation results under different configurations ($\beta_1 = 0.5, \beta_2 = 1$ and $\beta_1 = 0.75, \beta_2 = 1$), we report their average scores in Table~\ref{tab:table1}.

\section{Additional Experiments}
\label{app:add_exp}

\subsection{What Does Evidence Gain Capture?}
\label{app:evidence_gain_validation}

\begin{figure}[t]
\centering
\includegraphics[width=\columnwidth]{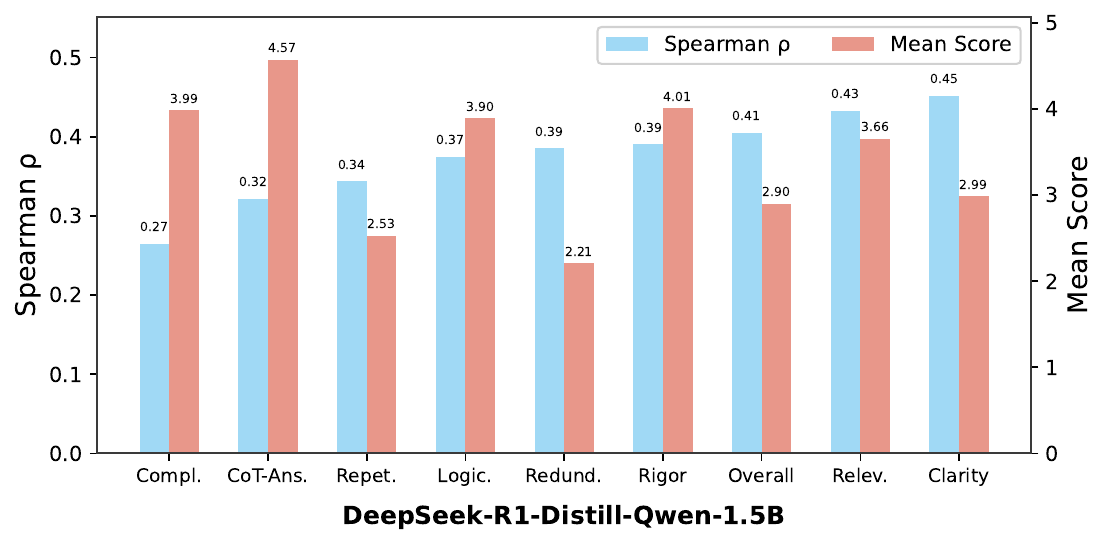}
\caption{\lightblue{Spearman} correlation between Evidence Gain and individual quality dimensions, alongside \lightred{mean scores} for each dimension. Evaluated on DeepSeek-R1-Distill-Qwen-1.5B with the same setup as Section~\ref{sec:evidence_gain}.}
\label{fig:dim_corr}
\end{figure}
To investigate which aspects of reasoning quality Evidence Gain most effectively captures, we conduct a fine-grained correlation analysis across the eight dimensions defined in our evaluation rubric (Appendix~\ref{app:llm_eval}). Figure~\ref{fig:dim_corr} presents results on DeepSeek-R1-Distill-Qwen-1.5B. We observe that all correlations are positive, ranging from $\rho = 0.27$ to $\rho = 0.45$, suggesting that Evidence Gain reflects multiple aspects of reasoning quality rather than a single dimension. 

Notably, Clarity ($\rho = 0.45$) and Relevance ($\rho = 0.43$) show stronger correlations than Completeness ($\rho = 0.27$) and CoT-Answer Alignment ($\rho = 0.32$). This pattern can be explained by our setup. Since we only evaluate traces that arrive at correct answers (Appendix \ref{app:exp_details}), these traces are already filtered for answer validity. As shown by the mean scores in Figure~\ref{fig:dim_corr}, Completeness (mean 3.99) and CoT-Answer Alignment (mean 4.57) exhibit high scores with limited variance among correct traces, because reaching the right answer typically requires including necessary steps and properly deriving the conclusion. In contrast, Clarity (mean 2.99) and Relevance (mean 3.66) show lower mean scores even among correct solutions, since a trace can reach the right answer while still being poorly organized. This difference in variance explains why Evidence Gain shows stronger correlations with dimensions that have greater room to discriminate among correct traces.

\subsection{Comparison with Proxy Signals}
\label{app:proxy_signals}

This section provides detailed descriptions of the three proxy signals compared with Evidence Gain in Section~\ref{sec:evidence_gain}.

\begin{itemize}
    \item \textbf{Length}: token count of the reasoning trace.
    \item \textbf{LogProb}: log-probability of rollouts, reflecting the model's confidence in its own output.
    \item \textbf{MajorVote}: a binary indicator of whether the answer matches the majority answer across rollouts, measuring answer-level consistency.
\end{itemize}

\begin{table*}[t]
\centering
\setlength{\tabcolsep}{14pt}
\resizebox{1.0\linewidth}{!}{
\begin{tabular}{lccccccc}
\toprule
\textbf{Method} & \textbf{AIME24} & \textbf{AIME25} & \textbf{HMMT25} & \textbf{MATH500} & \textbf{AMC23} & \textbf{Olympiad} & \textbf{Average} \\
\midrule
\textbf{DS-R1-Distill-Qwen-1.5B} & 29.2 & 24.1 & 13.1 & 86.0 & 73.7 & 51.8 & 46.3 \\
\quad + DAPO~\citep{yu2025dapoopensourcellmreinforcement}  & 40.0 & 28.4 & 19.2 & 90.0 & 84.4 & 61.6 & 53.9\\
\quad + IC-DAPO (V3.1) & \underline{44.5} \rlap{\red{$\boldsymbol\uparrow$}}& \underline{32.3} \rlap{\red{$\boldsymbol\uparrow$}}& \underline{19.5} \rlap{\red{$\boldsymbol\uparrow$}}& \underline{90.3} \rlap{\red{$\boldsymbol\uparrow$}}& \underline{85.8} \rlap{\red{$\boldsymbol\uparrow$}}& \underline{61.7} \rlap{\red{$\boldsymbol\uparrow$}}& \underline{55.7} \rlap{\red{$\boldsymbol\uparrow$}}\\
\quad + IC-DAPO (R1) & \textbf{45.6} \rlap{\red{$\boldsymbol\uparrow$}}& \textbf{34.2} \rlap{\red{$\boldsymbol\uparrow$}}& \textbf{19.7} \rlap{\red{$\boldsymbol\uparrow$}}& \textbf{90.6} \rlap{\red{$\boldsymbol\uparrow$}}& \textbf{86.2} \rlap{\red{$\boldsymbol\uparrow$}}& \textbf{62.1} \rlap{\red{$\boldsymbol\uparrow$}}& \textbf{56.4} \rlap{\red{$\boldsymbol\uparrow$}} \\
\midrule
\textbf{DS-R1-Distill-Qwen-7B} & 54.5 & 39.1 & 26.2 & 93.6 & 90.6 & 67.0 & 61.8  \\
\quad + DAPO~\citep{yu2025dapoopensourcellmreinforcement}  & 62.0 & 45.9 & 27.4 & 94.1 & 92.3 & 69.9 & 65.3 \\
\quad + IC-DAPO (V3.1)  & \underline{63.3} \rlap{\red{$\boldsymbol\uparrow$}} & \underline{47.5} \rlap{\red{$\boldsymbol\uparrow$}} & \underline{29.2} \rlap{\red{$\boldsymbol\uparrow$}}&\underline{95.5} \rlap{\red{$\boldsymbol\uparrow$}} &  \underline{92.6} \rlap{\red{$\boldsymbol\uparrow$}} & \underline{70.8} \rlap{\red{$\boldsymbol\uparrow$}} & \underline{66.4} \rlap{\red{$\boldsymbol\uparrow$}}\\
\quad + IC-DAPO (R1)  & \textbf{66.5} \rlap{\red{$\boldsymbol\uparrow$}} & \textbf{49.8} \rlap{\red{$\boldsymbol\uparrow$}} & \textbf{29.4} \rlap{\red{$\boldsymbol\uparrow$}}&\textbf{95.6} \rlap{\red{$\boldsymbol\uparrow$}} &  \textbf{93.7} \rlap{\red{$\boldsymbol\uparrow$}} & \textbf{71.7} \rlap{\red{$\boldsymbol\uparrow$}} & \textbf{67.8} \rlap{\red{$\boldsymbol\uparrow$}}\\
\bottomrule
\end{tabular}
}
\caption{Ablation on demonstration quality. IC-DAPO (R1) uses refined reasoning traces from DeepSeek-R1, while IC-DAPO (V3.1) uses solutions from DeepSeek-V3.1, a non-reasoning model. Both variants outperform DAPO, with R1 demonstrations yielding stronger results. \textbf{Bold} and \underline{underline} indicate best and second-best results. \red{$\boldsymbol\uparrow$} denotes improvement over DAPO.}
\label{tab:demo_quality}
\end{table*}

\subsection{Human Evaluation}
\label{app:human_eval}
We further validate whether the automatic quality scores produced by DeepSeek-V3.2 are consistent with human judgement. We first randomly sample 100 question-reasoning pairs generated by Deepseek-R1-Distill-Qwen-1.5B whose final answers are verified correct by rule. We then ask \textbf{four} human experts to separately assign an overall quality score between 1 and 5 for each reasoning trace. Annotators follow the same rubric for Deepseek-V3.2 (Appendix~\ref{app:llm_eval}). Finally, we compute Spearman correlation coefficients between the DeepSeek-V3.2 scores and each human expert, as well as the correlations among the human experts.
\begin{figure}[t]
\centering
\includegraphics[width=\columnwidth]{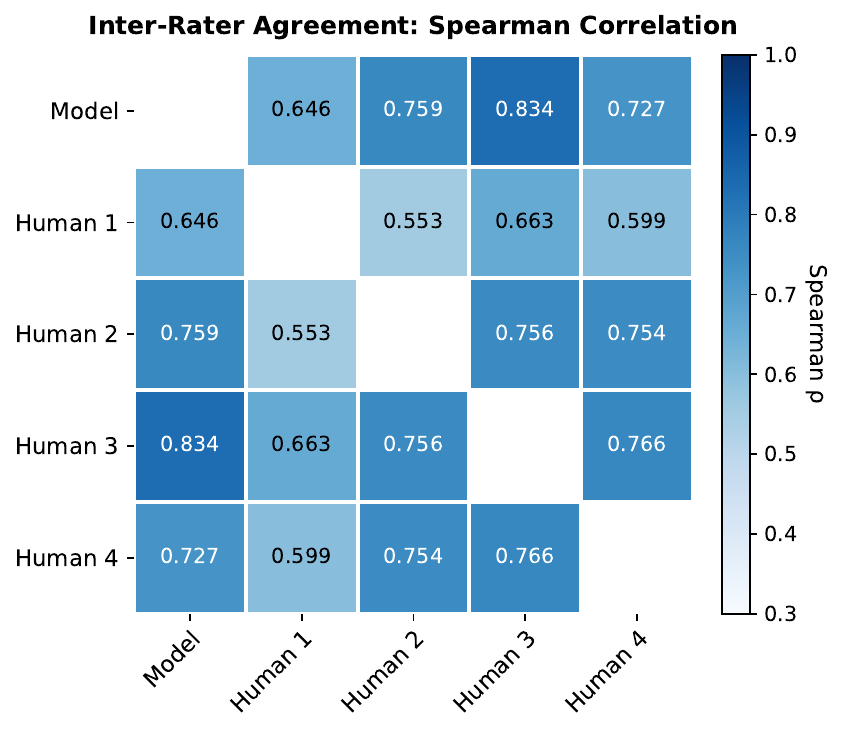}
\caption{Spearman correlation matrix between DeepSeek-V3.2 quality scores and four human expert ratings on 100 sampled reasoning traces, together with inter rater correlations among experts.}
\label{fig:human_corr}
\end{figure}

Figure~\ref{fig:human_corr} shows strong agreement between DeepSeek-V3.2 and human experts. The correlations between DeepSeek-V3.2 and individual experts fall between 0.65 and 0.83, with an average of 0.74, which is comparable to human expert agreement. Overall, DeepSeek-V3.2 performs within the variability of human judgement on this task, supporting its reliability as an automatic evaluator in our experiments. We attribute this robustness to the comprehensive and fine-grained quality rubric, which decomposes reasoning quality into multiple distinguishable dimensions and thus enables more consistent judgements across different raters.

\begin{table}[t]
\centering
\setlength{\tabcolsep}{7pt}
\resizebox{1.0\linewidth}{!}{
\begin{tabular}{llccccc}
\toprule
\textbf{Method} & \textbf{Scale} & \textbf{HumanEval} & \textbf{LCB} & \textbf{IFBench} & \textbf{IFEval} & \textbf{MMLU} \\
\midrule
DAPO & 1.5B & 73.6 & 30.4 & 9.1 & 38.3 & 46.6 \\
IC-DAPO & 1.5B & \textbf{75.5} & \textbf{31.9} & \textbf{10.7} & \textbf{40.6} & \textbf{48.3} \\
\midrule
DAPO & 7B & 90.5 & 48.5 & 13.3 & 56.7 & 65.7 \\
IC-DAPO & 7B & \textbf{92.3} & \textbf{49.8} & \textbf{14.2} & \textbf{58.6} & \textbf{67.3} \\
\bottomrule
\end{tabular}
}
\caption{Cross-domain zero-shot evaluation on code generation (HumanEval, LCB), instruction following (IFBench, IFEval), and general knowledge (MMLU).}
\label{tab:cross_domain}
\end{table}

\subsection{Cross-Domain Generalization}
While our training focuses on mathematical reasoning, the theoretical mechanism is domain-agnostic. To test cross-domain generalization, we evaluate IC-DAPO and DAPO checkpoints on 5 additional benchmarks spanning code generation (HumanEval, LiveCodeBench), instruction following (IFBench, IFEval), and general knowledge (MMLU) in zero-shot mode. Table~\ref{tab:cross_domain} shows that IC-DAPO consistently outperforms DAPO across all domains and both scales, suggesting that the benefits of quality-aware reweighting are not confined to mathematical reasoning.

\begin{figure}[t]
\centering
\includegraphics[width=\columnwidth]{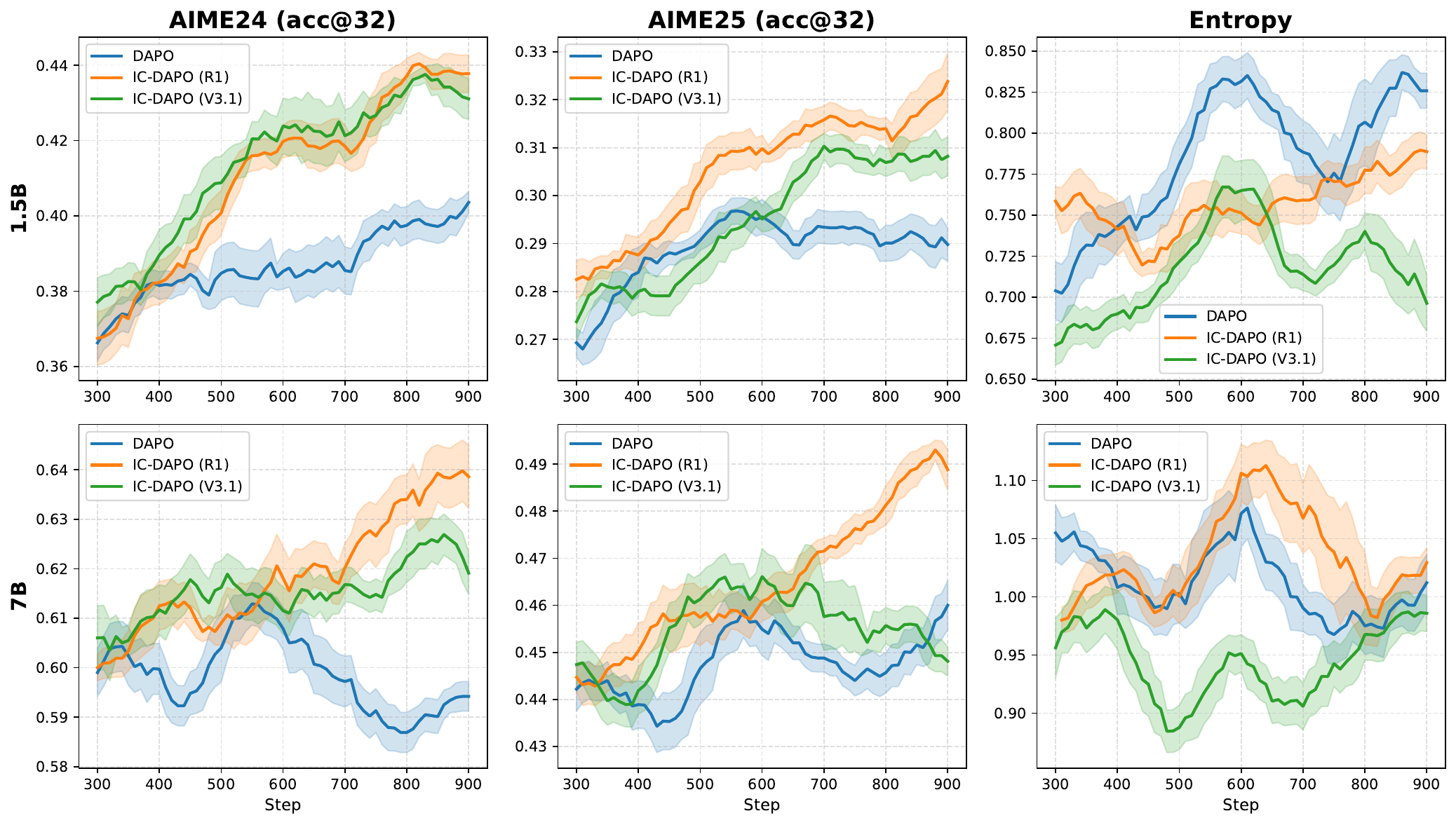}
\caption{Training dynamics across 1.5B and 7B models. IC-DAPO variants consistently outperform DAPO on AIME24 and AIME25 while maintaining stable entropy throughout training.}
\label{fig:training_dynamics}
\end{figure}

\subsection{Ablation Study on Demonstration Quality}
\label{app:ablation}

Our main experiments in Section~\ref{sec:exp} construct the demonstration set $\mathcal{E}$ using the refined content following DeepSeek-R1's \texttt{</think>} tag. To investigate whether demonstration quality affects training outcomes, we construct an alternative set using DeepSeek-V3.1~\citep{deepseekai2024deepseekv3technicalreport}, a strong non-reasoning model, for comparison. Without the reasoning capabilities that produce R1's refined traces, V3.1 is expected to generate solutions of lower quality, making it a suitable baseline for studying the effect of demonstration quality. To ensure fair comparison, we use the same 1,082 questions from the demonstration set $\mathcal{E}$ described in Section~\ref{sec:exp} and generate solutions using DeepSeek-V3.1. We generate multiple responses per question to ensure each question obtains a correct solution. 

Table~\ref{tab:demo_quality} shows that both IC-DAPO variants outperform DAPO, confirming In-Context RLVR's robustness over different demonstration sources. Crucially, IC-DAPO (R1) always surpasses IC-DAPO (V3.1), with gains of +1.1 on AIME24 and +1.9 on AIME25 at 1.5B scale. From a theoretical perspective, since the reference solutions in validation set are used to compute Evidence Gain (Eq.~\ref{eq:evidence_gain}), the quality of these references may affect the accuracy of $\Delta$ as a quality signal, which in turn affects the effectiveness of implicit reweighting (Eq. \ref{eq:reweighted_obj}).

\subsection{Training Dynamics}
\label{app:training_dynamics}

We present extended training dynamics for both 1.5B and 7B models in Figure~\ref{fig:training_dynamics}. Across both model scales, IC-DAPO variants (i.e., IC-DAPO (R1) and IC-DAPO (V3.1)) consistently achieve higher accuracy on AIME24 and AIME25 throughout training while maintaining comparable entropy trajectories to the DAPO baseline. Notably, IC-DAPO (R1) demonstrates a clear advantage over standard DAPO, with the performance gap widening as training progresses. Comparing IC-DAPO (R1) and IC-DAPO (V3.1), we observe that R1-generated demonstrations yield better final performance, consistent with our findings in Section~\ref{app:ablation} that higher-quality demonstrations lead to improved training outcomes. The entropy curves remain stable across all methods, indicating that In-Context RLVR does not compromise training stability while achieving superior accuracy, which aligns with the stability analysis in prior work~\citep{yu2025dapoopensourcellmreinforcement}.

\section{Quality Evaluation Prompt}
\label{sec:quality_eval_prompt}

We present the prompt template used for reasoning quality evaluation with DeepSeek-V3.2 in Figure \ref{fig:quality_eval_prompt}. The prompt assesses reasoning traces across eight dimensions, including Repetition, Redundancy, Logical Consistency, Relevance, CoT-Answer Alignment, Reasoning Rigor, Clarity, and Completeness, with each scoring on a 1--5 scale. The template uses placeholders \textcolor{placeholder}{\texttt{\{question\}}} and \textcolor{placeholder}{\texttt{\{response\}}} which are populated with the specific math problem and corresponding reasoning trace during evaluation.

\begin{figure*}[ht]
\begin{tcolorbox}[
    colback=codebg,
    colframe=codeframe,
    boxrule=0.6pt,
    arc=0pt,
    left=4pt,
    right=4pt,
    top=4pt,
    bottom=4pt
]
\begin{lstlisting}[style=promptstyle]
# Mathematical Reasoning Quality Evaluation

## Task
Evaluate the **quality** of mathematical reasoning (answer already verified correct). 
Assess whether this reasoning would be valuable as a learning reference.

**Key insight**: Correct answer (*@$\neq$@*) good reasoning. Watch for warning signs:
- Frequent "wait", "hold on", "let me try again"
- Same calculation repeated multiple times
- Long explorations that don't contribute to the answer
- Answer appearing without clear derivation

Be strict. Reasoning that wanders, second-guesses itself, or reaches the right answer through messy exploration is not high-quality, regardless of correctness.

## Dimensions (1-5 each)

| Dimension           | Assess                                      |
|---------------------|---------------------------------------------|
| Repetition          | Same steps or ideas repeated?               |
| Redundancy          | Unnecessary or verbose content?             |
| Logical Consistency | Contradictions or backtracks?               |
| Relevance           | Off-topic content or dead-end exploration?  |
| CoT-Answer Alignment| Answer clearly derived from reasoning?      |
| Reasoning Rigor     | All claims justified without leaps?         |
| Clarity             | Easy to follow and well-structured?         |
| Completeness        | All necessary steps present?                |

## Scoring

| Score | Meaning                                              |
|-------|------------------------------------------------------|
| 5     | Excellent: Perfect textbook quality                  |
| 4     | Good: Minor flaws, suitable as reference             |
| 3     | Average: Clear flaws, but followable                 |
| 2     | Weak: Major issues hurting pedagogical value         |
| 1     | Poor: Guessed answer, chaotic flow                   |

## Evaluate This

### Math Problem:
(*@\textcolor{placeholder}{\texttt{\{question\}}}@*)

### Reasoning Process (Answer Verified Correct):
(*@\textcolor{placeholder}{\texttt{\{response\}}}@*)

---

Output JSON only:
{
  "dimensions": {
    "repetition": {"score": <1-5>, "comment": "<evidence>"},
    "redundancy": {"score": <1-5>, "comment": "<evidence>"},
    "logical_consistency": {"score": <1-5>, "comment": "<evidence>"},
    "relevance": {"score": <1-5>, "comment": "<evidence>"},
    "cot_answer_alignment": {"score": <1-5>, "comment": "<evidence>"},
    "reasoning_rigor": {"score": <1-5>, "comment": "<evidence>"},
    "clarity": {"score": <1-5>, "comment": "<evidence>"},
    "completeness": {"score": <1-5>, "comment": "<evidence>"}
  },
  "overall_analysis": "<key findings, limiting factors>",
  "score": <1-5>
}
\end{lstlisting}
\end{tcolorbox}
\caption{Quality Evaluation Prompt Template. Placeholders \textcolor{placeholder}{\texttt{\{question\}}} and \textcolor{placeholder}{\texttt{\{response\}}} are replaced with the actual math problem and reasoning trace during evaluation.}
\label{fig:quality_eval_prompt}
\end{figure*}

\clearpage

\onecolumn

\section{Proofs of Main Results}
\label{app:proof}

{\small\color{gray}\noindent [This section of appendix uses single-column format for mathematical readability.]}\\

This appendix provides complete derivations for the theoretical results in Section~\ref{sec:method}. We first establish notation, then derive the reweighting identity and characterize its relationship to Evidence Gain.

\subsection{Notations}
\label{app:notation}

We formalize the training setup. Let $q \sim \mathcal{D}$ denote a question from the training distribution. Let $\mathcal{E} = \{e^{(i)}\}_{i=1}^{|\mathcal{E}|}$ be the held-out validation set, where each demonstration $e = (e_q, e_r)$ consists of a question $e_q$ and a high-quality reference reasoning trace $e_r$. During In-Context RLVR training, a demonstration $e$ is sampled from $\mathcal{E}$ and prepended to $q$, after which the model generates a reasoning trace $r \sim \pi_\theta(\cdot | e, q)$.

\subsection{Bayesian Identity}
\label{app:bayesian_identity}

We establish the key identity relating the conditioned policy to the base policy. The derivation relies on the following assumption, which reflects the independent sampling structure in our data construction.

\begin{assumption}
\label{assump:independence}
Providing only the demonstration question $e_q$, without its reasoning trace $e_r$, does not alter the distribution over reasoning traces for a training question $q$. Conversely, providing only the training question $q$, without any reasoning trace, does not alter the distribution over reasoning traces for the demonstration question $e_q$. Formally:
\begin{align}
    \pi_\theta(r | e_q, q) &= \pi_\theta(r | q), \tag{A1}\\
    \pi_\theta(e_r | q, e_q) &= \pi_\theta(e_r | e_q). \tag{A2}
\end{align}
\end{assumption}

\paragraph{Remark.} 
This assumption is natural given the independent sampling of demonstrations and training questions. To see why (A1) holds, suppose $e_q$ is ``Solve $x^2 - 5x + 6 = 0$'' and $q$ is ``Compute $\int \sin x \, dx$.'' The bare statement of $e_q$ carries no information about integration techniques; it only indicates that the context involves math. Crucially, the model already knows this from observing $q$ itself. Thus, conditioning on $e_q$ alone provides no additional signal for solving $q$. A symmetric argument establishes (A2). While edge cases may exist where two questions happen to share methodological structure, so that $e_q$ could bias the preferred style or method for solving $q$, such coincidences are rare under independent sampling and average out at scale. Thus (A1) and (A2) hold as statistical approximations.

\paragraph{Empirical Validation.} 
We validate Assumption~\ref{assump:independence} using DeepSeek-R1-Distill-Qwen-1.5B. We randomly select 100 question-reasoning pairs $(q, r)$ from rollouts generated during training, and independently sample 100 additional questions $\{q'\}$ from the dataset. We prepend each $q'$ to each $(q, r)$ resulting in 10,000 samples ($S$). We measure the relative change in reasoning log-probability when prepending an randomly select question $q'$:
\begin{equation}
\delta = \frac{1}{|S|}\sum_{(q,r,q')\in S} \frac{|\log \pi_\theta(r|q',q) - \log \pi_\theta(r|q)|}{|\log \pi_\theta(r|q)|}.
\end{equation}
Notably, we obtain $\boldsymbol{\delta = 0.0384} < 5\%$, confirming that independently sampled question statements have negligible influence on reasoning distributions on average.

Although Assumption~\ref{assump:independence} suggests that question statements alone provide negligible cross example influence, a complete demonstration $(e_q, e_r)$ \emph{does} provide transferable information: the reasoning trace $e_r$ may exhibit problem-solving patterns (e.g., algebraic manipulation, problem decomposition) that generalize across problems. This distinction is precisely what Evidence Gain captures, and it explains why the policy model's ICL ability can serve as an effective quality signal.

\begin{lemma}[Bayesian Identity]
\label{lemma:bayesian}
Under Assumption~\ref{assump:independence}, the conditioned policy admits the decomposition:
\begin{equation}
    \pi_\theta(r|e,q) = \pi_\theta(r|q) \cdot \frac{\pi_\theta(e_r|q, r, e_q)}{\pi_\theta(e_r|e_q)}.
    \label{eq:bayesian_identity_app}
\end{equation}
\end{lemma}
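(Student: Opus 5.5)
The identity should follow from writing the joint probability of the two reasoning traces $(r, e_r)$ in two ways via the chain rule, and then using Assumption~\ref{assump:independence} to strip out the irrelevant conditioning. The one modeling convention to fix first is that the demonstration-conditioned policy is $\pi_\theta(r|e,q) = \pi_\theta(r|e_q,e_r,q)$. The context contains both questions $e_q, q$, and we treat $\pi_\theta(\cdot,\cdot|e_q,q)$ as a joint distribution over the pair of traces $(r,e_r)$ given the two questions. This is the same implicit Bayesian view under which Evidence Gain in Eq.~\ref{eq:evidence_gain} is defined, where $e_r$ is scored after $(q,r,e_q)$.

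\textbf{Step 1: factor the joint in two orders.} By the chain rule,
\begin{equation*}
\pi_\theta(r,e_r|e_q,q) = \pi_\theta(r|e_q,q)\,\pi_\theta(e_r|q,r,e_q) = \pi_\theta(e_r|e_q,q)\,\pi_\theta(r|e_q,e_r,q).
\end{equation*}

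\textbf{Step 2: solve and simplify.} Solving for the conditioned policy gives
\begin{equation*}
\pi_\theta(r|e,q)=\pi_\theta(r|e_q,q)\,\frac{\pi_\theta(e_r|q,r,e_q)}{\pi_\theta(e_r|q,e_q)}.
\end{equation*}
We then apply (A1) to replace $\pi_\theta(r|e_q,q)$ by $\pi_\theta(r|q)$, and (A2) to replace the denominator $\pi_\theta(e_r|q,e_q)$ by $\pi_\theta(e_r|e_q)$. This yields Eq.~\ref{eq:bayesian_identity_app}. Positivity of the denominator holds because softmax-parameterized autoregressive models assign strictly positive probability to every finite sequence, so the division is legitimate.

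\textbf{Main obstacle.} The main difficulty is not algebraic but justifying Step 1. An autoregressive LM evaluates each conditional under a particular token ordering of the context, so the two factorizations correspond to different prompt orderings, for example $(q, r, e_q, e_r)$ versus $(e_q, e_r, q, r)$. Equating them requires treating $\pi_\theta$ as a single consistent joint distribution that is invariant to the ordering of the context blocks. I would state this explicitly as the idealization underlying the lemma, namely that the model acts as a coherent Bayesian predictor over the multiset of (question, trace) blocks. Under that idealization Steps 1--2 are exact, and Assumption~\ref{assump:independence} supplies the only remaining approximation, which is already empirically supported by the measured $\delta$.
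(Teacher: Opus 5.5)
Your proof is correct and matches the paper's argument: the paper applies Bayes' rule directly to get $\pi_\theta(r|e_q,e_r,q)=\pi_\theta(r|e_q,q)\,\pi_\theta(e_r|q,r,e_q)/\pi_\theta(e_r|q,e_q)$, which is your two-order chain-rule factorization solved for the conditioned policy, and then applies (A1) and (A2). The paper uses the step you flag, that $\pi_\theta$ is a single joint over $(r,e_r)$ whatever the prompt ordering, without stating it, so your making it explicit adds rigor rather than changing the route.
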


\begin{proof}
We begin by expanding the conditioned policy $\pi_\theta(r | e_q, e_r, q)$ using the definition of conditional probability. By Bayes' Rule, we have:
\begin{equation}
    \pi_\theta(r | e_q, e_r, q)
    = \frac{\pi_\theta(r | e_q, q)\, \pi_\theta(e_r | q, r, e_q)}{\pi_\theta(e_r | q, e_q)}.
    \label{eq:bayes_expanded}
\end{equation}
We now apply Assumption~\ref{assump:independence} to simplify the right-hand side. By (A1), the numerator term $\pi_\theta(r | e_q, q)$ reduces to $\pi_\theta(r | q)$, since observing $e_q$ alone provides no additional information for generating $r$. By (A2), the denominator term $\pi_\theta(e_r | q, e_q)$ reduces to $\pi_\theta(e_r | e_q)$, since observing $q$ alone provides no additional information for generating $e_r$. Substituting these simplifications into Eq.~\eqref{eq:bayes_expanded}, we obtain:
\begin{equation}
    \pi_\theta(r | e_q, e_r, q)
    = \pi_\theta(r | q) \cdot \frac{\pi_\theta(e_r | q, r, e_q)}{\pi_\theta(e_r | e_q)}.
\end{equation}
This completes the proof.
\end{proof}

\subsection{Implicit Reward Reweighting}
\label{app:reweight_proof}

We now establish that In-Context RLVR implicitly performs reward reweighting, showing how the policy model's ICL mechanism naturally upweights high-quality reasoning traces. We present the theoretical results in two separate theorems.

\begin{theorem}[Implicit Reweighting]
\label{thm:implicit_reweight}
Under Assumption~\ref{assump:independence}, the In-Context RLVR objective
\begin{equation}
    J_{\textup{IC}}(\theta) = \mathbb{E}_{q} \mathbb{E}_{e \sim \mathcal{E},\, r \sim \pi_\theta(\cdot|e,q)} \bigl[ R(q, r) \bigr]
\end{equation}
can be exactly rewritten as
\begin{equation}
    J_{\textup{IC}}(\theta) = \mathbb{E}_{q} \mathbb{E}_{r \sim \pi_\theta(\cdot|q)} \bigl[ R(q, r) \cdot w(q, r) \bigr],
\end{equation}
where the weight factor is defined as
\begin{equation}
    w(q,r) = \mathbb{E}_{e \sim \mathcal{E}}[\exp(\Delta_e)]
\end{equation}
with $\Delta_e = \log \pi_\theta(e_r|q,r,e_q) - \log \pi_\theta(e_r|e_q)$.
\end{theorem}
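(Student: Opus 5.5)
The plan is a direct change of measure using Lemma~\ref{lemma:bayesian}. I first fix $q$ and $e$ and write the inner expectation over $r$ as a sum (or integral) over the discrete space of token sequences:
\begin{equation*}
\mathbb{E}_{r \sim \pi_\theta(\cdot|e,q)}[R(q,r)] = \sum_r \pi_\theta(r|e,q)\, R(q,r).
\end{equation*}
Then I substitute the Bayesian identity $\pi_\theta(r|e,q) = \pi_\theta(r|q)\,\pi_\theta(e_r|q,r,e_q)/\pi_\theta(e_r|e_q)$. I rewrite the likelihood ratio as $\exp(\Delta_e)$ with $\Delta_e = \log \pi_\theta(e_r|q,r,e_q) - \log \pi_\theta(e_r|e_q)$. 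The inner expectation becomes $\sum_r \pi_\theta(r|q) R(q,r)\exp(\Delta_e) = \mathbb{E}_{r\sim\pi_\theta(\cdot|q)}[R(q,r)\exp(\Delta_e)]$. This is exact for every fixed $e$, provided $\pi_\theta(e_r|e_q)>0$, which holds for a softmax-parameterized language model.

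Next I take the outer expectation over $e \sim \mathcal{E}$. Since $\mathcal{E}$ is a finite set sampled uniformly, that expectation is a finite average. It therefore commutes with the (possibly countable) sum over $r$, by linearity. Justifying the interchange needs only nonnegativity: $R\in\{0,1\}$ and $\exp(\Delta_e)>0$, so Tonelli applies even if the response space is treated as infinite. After exchanging the two sums, and noting that $R(q,r)$ and $\pi_\theta(r|q)$ do not depend on $e$, I pull them out. This gives $\mathbb{E}_{r\sim\pi_\theta(\cdot|q)}\bigl[R(q,r)\,\mathbb{E}_{e\sim\mathcal{E}}[\exp(\Delta_e)]\bigr] = \mathbb{E}_{r\sim\pi_\theta(\cdot|q)}[R(q,r)\,w(q,r)]$. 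Finally I apply $\mathbb{E}_q$ to both sides to obtain the claimed identity for $J_{\textup{IC}}(\theta)$.

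The only real subtlety is that the change of measure must be a genuine identity. The right-hand side of Lemma~\ref{lemma:bayesian} must therefore define a probability distribution in $r$, i.e., summing over $r$ must give $1$. I will check this by noting that $\sum_r \pi_\theta(r|q)\pi_\theta(e_r|q,r,e_q) = \pi_\theta(e_r|q,e_q)$, which equals $\pi_\theta(e_r|e_q)$ by (A2). So normalization is consistent with the assumption, and no extra constant appears. Everything else is bookkeeping; the theorem is exact given Assumption~\ref{assump:independence}, with all approximation pushed into that assumption.
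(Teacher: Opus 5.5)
Your proposal is correct and follows the paper's proof essentially step for step: substitute Lemma~\ref{lemma:bayesian} into the sum over $r$, exchange the finite uniform average over $\mathcal{E}$ with the sum over $r$, and collect the averaged likelihood ratio into $w(q,r)=\mathbb{E}_{e}[\exp(\Delta_e)]$. Your Tonelli/nonnegativity justification for the interchange and your normalization check are extra rigor that the paper omits, and they do not change the argument.
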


\begin{proof}
Invoking Lemma~\ref{lemma:bayesian} and assuming uniform sampling over $\mathcal{E}$, we expand:
\begin{align}
    J_{\textup{IC}}(\theta) 
    &= \mathbb{E}_{q} \left[ \frac{1}{|\mathcal{E}|} \sum_{e} \sum_{r} \pi_\theta(r|e,q) \cdot R(q, r) \right] \\[4pt]
    &= \mathbb{E}_{q} \left[ \frac{1}{|\mathcal{E}|} \sum_{e} \sum_{r} \pi_\theta(r|q) \cdot \frac{\pi_\theta(e_r|q,r,e_q)}{\pi_\theta(e_r|e_q)} \cdot R(q,r) \right] \\[4pt]
    &= \mathbb{E}_{q} \left[ \sum_{r} \pi_\theta(r|q) \cdot R(q,r) \cdot \underbrace{\frac{1}{|\mathcal{E}|}\sum_{e} \frac{\pi_\theta(e_r|q,r,e_q)}{\pi_\theta(e_r|e_q)}}_{w(q,r)} \right].
\end{align}
Writing $\Delta_e = \log \pi_\theta(e_r|q,r,e_q) - \log \pi_\theta(e_r|e_q)$, the weight becomes $w(q,r) = \mathbb{E}_{e}[\exp(\Delta_e)]$.
\end{proof}

\paragraph{Interpretation.} Theorem~\ref{thm:implicit_reweight} establishes an \emph{exact} equivalence: the In-Context RLVR objective is mathematically identical to standard RLVR with rewards reweighted by $w(q,r)$. No approximation is involved. The weight $w(q,r) = \mathbb{E}_e[\exp(\Delta_e)]$ measures how much the candidate trace $r$ improves the model's ability to generate reference solutions on average.

The next theorem characterizes the relationship between the implicit weight $w(q,r)$ and the Evidence Gain $\Delta(q,r) = \mathbb{E}_e[\Delta_e]$.

\begin{theorem}[Log-Linear Approximation]
\label{thm:log_linear}
The weight factor $w(q,r)$ from Theorem~\ref{thm:implicit_reweight} and the Evidence Gain $\Delta(q,r) = \mathbb{E}_e[\Delta_e]$ satisfy:
\begin{enumerate}
    \item \textbf{(Lower bound)} By Jensen's inequality: $w(q,r) \geq \exp\bigl(\Delta(q,r)\bigr)$.
    \item \textbf{(Refined bound)} $\log w(q,r) = \Delta(q,r) + \log\bigl(1 + \tfrac{1}{2}\mathrm{Var}_e[\Delta_e]\bigr) + o(\mathrm{Var}_e[\Delta_e])$.
\end{enumerate}
In particular, when $\mathrm{Var}_e[\Delta_e]$ is approximately constant across $(q,r)$ pairs, the relationship simplifies to:
\begin{equation}
    \log w(q, r) \approx \Delta(q, r) + c,
\end{equation}
for some constant $c$ that depends on the average variance $\mathbb{E}_{q,r}[\mathrm{Var}_e[\Delta_e]]$.
\end{theorem}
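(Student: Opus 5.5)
Fix a pair $(q,r)$ and view $\Delta_e$ as a random variable over the uniform draw $e\sim\mathcal{E}$. Write $\mu=\Delta(q,r)=\mathbb{E}_e[\Delta_e]$ and $\sigma^2=\mathrm{Var}_e[\Delta_e]$. By Theorem~\ref{thm:implicit_reweight}, $w(q,r)=\mathbb{E}_e[\exp(\Delta_e)]$, so all three claims concern the moment generating function of $\Delta_e$ at $1$. Since $\mathcal{E}$ is finite and every $\Delta_e$ is a finite log-ratio of positive probabilities, all moments exist and no integrability issues arise.

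For the lower bound, I will apply Jensen's inequality to the convex function $\exp$, which gives $\mathbb{E}_e[e^{\Delta_e}]\ge e^{\mathbb{E}_e[\Delta_e]}=e^{\mu}$. Taking logarithms yields $\log w\ge\Delta$.

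For the refined bound, I will factor out the mean: $w=e^{\mu}\,\mathbb{E}_e[e^{X}]$, where $X=\Delta_e-\mu$ is centered with variance $\sigma^2$. Taylor expansion with remainder gives
\begin{equation*}
e^{X}=1+X+\tfrac12X^2+\tfrac16X^3e^{\xi}.
\end{equation*}
Taking expectations and using $\mathbb{E}[X]=0$, $\mathbb{E}[X^2]=\sigma^2$ gives $\mathbb{E}[e^X]=1+\tfrac12\sigma^2+R$, with $|R|\le\tfrac16\mathbb{E}[|X|^3e^{|X|}]$. Hence
\begin{equation*}
\log w=\mu+\log\bigl(1+\tfrac12\sigma^2+R\bigr)=\mu+\log\bigl(1+\tfrac12\sigma^2\bigr)+O(|R|).
\end{equation*}

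The main obstacle is justifying $R=o(\sigma^2)$, since the third moment is not controlled by the variance in general. I will interpret the limit as $\sigma\to0$ under a mild regularity condition. One option is that $|X|$ is bounded by a constant multiple of $\sigma$; this holds, for example, when the distribution of $\Delta_e$ is scaled uniformly, and it gives $\mathbb{E}|X|^3e^{|X|}=O(\sigma^3)$. Another option is that the centered $\Delta_e$ is sub-Gaussian with proxy variance comparable to $\sigma^2$. Either gives $|R|=O(\sigma^3)=o(\sigma^2)$. I would state this assumption explicitly, noting that it is automatic in the small-variance regime relevant for a finite $\mathcal{E}$ with concentrated log-ratios. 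As a cross-check on the second-order term, the cumulant expansion $\log\mathbb{E}e^{X}=\tfrac12\sigma^2+\tfrac16\kappa_3+\dots$ agrees with $\log(1+\tfrac12\sigma^2)$ to order $\sigma^2$.

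For the final simplification, suppose $\sigma^2(q,r)\approx\bar\sigma^2=\mathbb{E}_{q,r}[\mathrm{Var}_e\Delta_e]$ for all $(q,r)$. Then the correction term equals $c=\log(1+\tfrac12\bar\sigma^2)$ up to an error that is small when the variances are nearly equal, plus the $o(\sigma^2)$ remainder. This gives $\log w\approx\Delta+c$, equivalently $w\propto\exp(\Delta)$.
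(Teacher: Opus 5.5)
Your proposal is correct and follows the paper's proof essentially step for step. You use Jensen for (i); for (ii) you factor out $e^{\Delta}$, Taylor-expand $e^{\Delta_e-\Delta}$ to second order, use $\mathbb{E}_e[\Delta_e-\Delta]=0$, and take logarithms. Your explicit Lagrange remainder is more careful than the paper, which passes the expectation through $o\bigl((\Delta_e-\Delta)^2\bigr)$ without comment. Your regularity condition in fact holds automatically: uniform sampling over the finite set $\mathcal{E}$ gives $|\Delta_e-\Delta|\le\sqrt{|\mathcal{E}|}\,\sigma$ for every $e$, so $|R|=O(\sigma^3)$ as $\sigma\to0$.
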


\begin{proof}
Since $\exp(\cdot)$ is convex, Jensen's inequality gives $\mathbb{E}_e[\exp(\Delta_e)] \geq \exp(\mathbb{E}_e[\Delta_e]) = \exp(\Delta(q,r))$, establishing (i).

For (ii), we expand $\exp(\Delta_e)$ around $\Delta:=\Delta(q,r)$ via Taylor series:
\begin{align}
    \exp(\Delta_e) &= \exp(\Delta) \cdot \exp(\Delta_e - \Delta) \\
    &= \exp(\Delta) \cdot \Bigl(1 + (\Delta_e - \Delta) + \tfrac{1}{2}(\Delta_e - \Delta)^2 + o\bigl((\Delta_e - \Delta)^2\bigr)\Bigr).
\end{align}
Taking expectations and using $\mathbb{E}_e[\Delta_e - \Delta] = 0$:
\begin{equation}
    w(q,r) = \exp(\Delta) \cdot \bigl(1 + \tfrac{1}{2}\mathrm{Var}_e[\Delta_e] + o(\mathrm{Var}_e[\Delta_e])\bigr).
\end{equation}
Taking logarithms on both sides and applying $\log(1+x) = x + o(x)$ yields (ii).
\end{proof}

\paragraph{Interpretation.} Result (i) shows that $\exp(\Delta(q,r))$ serves as a \emph{lower bound} for the implicit weight $w(q,r)$, but does not quantify how tight this bound is. Result (ii) refines this by showing that the gap is controlled by $\tfrac{1}{2}\mathrm{Var}_e[\Delta_e]$. Specifically, $\log w(q,r)$ exceeds $\Delta(q,r)$ by approximately $\log(1 + \tfrac{1}{2}\mathrm{Var}_e[\Delta_e]) > 0$, a strictly positive correction.

\begin{figure}[t]
\centering
\includegraphics[width=\columnwidth]{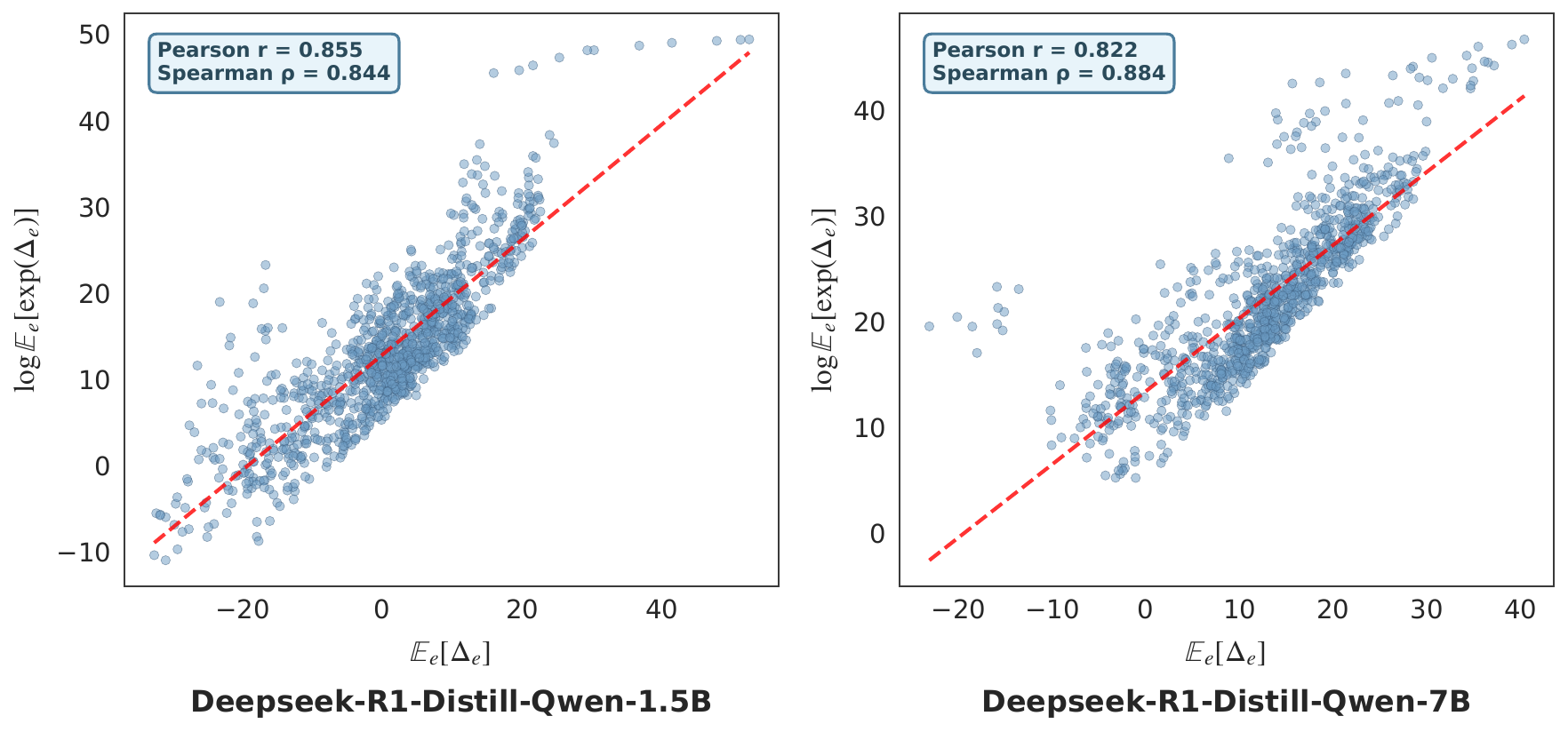}
\caption{Empirical verification of the log-linear relationship between $\log w(q,r) = \log \mathbb{E}_e[\exp(\Delta_e)]$ and $\Delta(q,r) = \mathbb{E}_e[\Delta_e]$. Strong Pearson correlations ($r = 0.855$ for 1.5B, $r = 0.822$ for 7B) confirm that Evidence Gain serves as a reliable proxy for the implicit weight.}
\label{fig:log_linear}
\end{figure}

\paragraph{Empirical Verification.}
Although Theorem~\ref{thm:log_linear}(ii) indicates that $\log w(q,r)$ and $\Delta(q,r)$ differ by a variance-dependent term, if $\mathrm{Var}_e[\Delta_e]$ remains relatively stable across different $(q,r)$ pairs, the relationship simplifies to an approximate linear correspondence. To verify this, we conduct experiments using rollouts generated by DeepSeek-R1-Distill-Qwen at 1.5B and 7B~\citep{guo2025deepseekr1}. Specifically, we randomly sample 1,100 $(q, r)$ pairs for the 1.5B model and 1,000 pairs for the 7B model. For each $(q,r)$ pair, we compute $\Delta(q,r) = \mathbb{E}_e[\Delta_e]$ and $\log w(q,r) = \log \mathbb{E}_e[\exp(\Delta_e)]$, then measure their correlation.

Figure~\ref{fig:log_linear} presents the results. We observe strong linear relationships. The 1.5B model yields Pearson $r = 0.855$ (Spearman $\rho = 0.844$), while the 7B model achieves $r = 0.822$ (Spearman $\rho = 0.884$). These high correlations confirm that the variance term contributes a near-constant offset across $(q,r)$ pairs, validating the log-linear approximation in Theorem~\ref{thm:log_linear}.

\paragraph{Summary.} Combining Theorems~\ref{thm:implicit_reweight} and~\ref{thm:log_linear}, we conclude that In-Context RLVR exactly reweights rewards by $w(q,r) = \mathbb{E}_e[\exp(\Delta_e)]$, and this weight is approximately log-linear in Evidence Gain: $\log w \approx \Delta + c$. This confirms that the policy model's intrinsic ICL ability provides an effective quality signal. Traces with higher Evidence Gain receive proportionally higher weights in the reweighted objective, without requiring any external evaluator. In this way, In-Context RLVR leverages the model's own capacity to distinguish reasoning quality, enabling implicit reward reweighting through a simple modification to the training procedure.

\end{document}